\def\eqref#1{equation~\ref{#1}}
\def\1{\bm{1}}
\DeclareMathAlphabet{\mathsfit}{\encodingdefault}{\sfdefault}{m}{sl}
\SetMathAlphabet{\mathsfit}{bold}{\encodingdefault}{\sfdefault}{bx}{n}
\newcommand{\R}{\mathbb{R}}
\newcolumntype{Y}{>{\centering\arraybackslash}X} 
\newcommand{\HeatWhiten}{0.40}  
\newcommand{\DropGamma}{1.5}    
\newcommand{\GreenDarken}{20.0} 
\newcommand{\heatGYR}[3]{%
  \pgfmathsetmacro{\range}{#2-#1}%
  \pgfmathsetmacro{\t}{\range<=0 ? 0 : max(min((#3-#1)/\range,1),0)}%

  \pgfmathsetmacro{\G}{min(1, 2*\t)}%
  \pgfmathsetmacro{\u}{max(min(2*\t - 1,1),0)}
  \pgfmathsetmacro{\R}{1 - pow(\u,\DropGamma)}

  \pgfmathsetmacro{\w}{max(min(\HeatWhiten - \GreenDarken*(\t*\t),1),0)}%

  \pgfmathsetmacro{\Rw}{\w + (1-\w)*\R}%
  \pgfmathsetmacro{\Gw}{\w + (1-\w)*\G}%
  \pgfmathsetmacro{\Bw}{\w} 
  \edef\__bg{\noexpand\cellcolor[rgb]{\Rw,\Gw,\Bw}}%
  \__bg \num{#3}%
}
\newcommand{\GSMEightK}[1]{\heatGYR{32.07}{33.43}{#1}}
\newcommand{\MathCol}[1]{\heatGYR{12.72}{13.34}{#1}}
\newcommand{\SVAMPcol}[1]{\heatGYR{43.2}{44.6}{#1}}
\newcommand{\TheoremQAcol}[1]{\heatGYR{9.5}{10.84}{#1}}
\newcommand{\LogiQAcol}[1]{\heatGYR{17.81}{19.2}{#1}}
\newcommand{\GaokaoMathcol}[1]{\heatGYR{2.28}{3.99}{#1}}
\newcommand{\GSMHardcol}[1]{\heatGYR{7.58}{7.96}{#1}}
\newcommand{\GSMEightKTwo}[1]{\heatGYR{31.69}{32.45}{#1}}
\newcommand{\MathColTwo}[1]{\heatGYR{12.18}{13.02}{#1}}
\newcommand{\SVAMPcolTwo}[1]{\heatGYR{42.3}{44}{#1}}
\newcommand{\TheoremQAcolTwo}[1]{\heatGYR{9.77}{12.05}{#1}}
\newcommand{\LogiQAcolTwo}[1]{\heatGYR{17.97}{19.35}{#1}}
\newcommand{\GaokaoMathcolTwo}[1]{\heatGYR{2.85}{5.13}{#1}}
\newcommand{\GSMHardcolTwo}[1]{\heatGYR{7.58}{7.96}{#1}}
\title{{%
    \textls[-15]{Bottlenecked Transformers: Periodic KV Cache Consolidation for Generalised Reasoning}%
}}
\author{%
  Adnan Oomerjee \\
  University College London\\
  Huawei Noah's Ark, UK\\
  \texttt{adnan.oomerjee.22@ucl.ac.uk}
  \And
 Zafeirios Fountas  \\
  Huawei Noah's Ark, UK\\
  \texttt{zafeirios.fountas@huawei.com} \\
  \And
 Haitham Bou-Ammar \\
  Huawei Noah's Ark, UK\\
  University College London\\
  \texttt{haitham.ammar@huawei.com} \hspace*{0pt}\\
  \And
 Jun Wang\\
  University College London\\
  \texttt{jun.wang@ucl.ac.uk} \\
}
\begin{document}

\maketitle

\begin{abstract}
Transformer LLMs have been shown to exhibit strong reasoning ability that scales with inference-time compute, most prominently through token-space “thinking” chains of thought. A growing line of work pushes extra computation into the model’s latent space, which we term Auxiliary Latent-Space Computation (ALSC). Existing ALSC methods largely fall into three buckets: (i) token-mediated latent rollouts, (ii) residual/activation steering, and (iii) memory (KV) compression. An underexplored alternative is memory consolidation/reconsolidation, two processes in the brain that are responsible for stabilising newly formed memory traces, and, upon recall, transiently rendering established traces plastic such they can integrate new contextual information before restabilising. In Transformer LLMs, this can be seen as analogous to performing in-place rewrites of new KV segments, and rewrites of recalled past segments. In this work, we give a theoretical justification as to why memory (re)consolidation via KV cache rewrites is beneficial for improved reasoning. We do this through the lens of Information Bottleneck (IB) theory, which posits that model generalisation emerges from an optimal balance between input information compression and retention of predictive information in latent representations. We then introduce the Bottlenecked Transformer, which augments a backbone LLM with a Cache Processor, an auxiliary Transformer that performs periodic, non-causal, in-place KV rewrites at newline-delimited reasoning step boundaries. The Processor consolidates recently written KV entries and reconsolidates a small, top-$k$ attention-selected set of prior entries. We evaluate our Bottlenecked Transformer architecture on math reasoning benchmarks. Our model sees consistent performance gains over vanilla Transformers and pause-token augmented baselines, with gains of up to +6.6pp for selected tasks/backbones.
\end{abstract}

\section{Introduction}
Transformer-based large language models (LLMs) have achieved strong results in retrieval, pattern recognition, and knowledge extraction \citep{brown2020language, chowdhery2022palm}. With carefully engineered prompts/post-training, they can also display nontrivial reasoning behaviours \citep{wei2023chainofthoughtpromptingelicitsreasoning, shao2024deepseekmathpushinglimitsmathematical, deepseekai2025deepseekr1incentivizingreasoningcapability}. A critical development that has significantly advanced Transformer LLMs is the discovery that reasoning performance scales strongly with inference-time compute. The most widely applied example of this has been seen in "reasoning" models, which generate verbal chains of thought before giving a final answer \citep{wei2023chainofthoughtpromptingelicitsreasoning}. 

A growing body of work extends this idea to algorithms that allow LLMs to perform additional compute during generation directly in a latent space rather than the token space. We refer to these as \emph{Auxiliary Latent-Space Computation} (ALSC) methods, which facilitate computation over internal continuous states during inference without emitting intermediate natural-language tokens, doing so in addition to (or in place of) the standard one-forward-pass-per-token decoding strategy. In this work we focus on sequence-level ALSC: operators that intervene between decoding steps to transform the model’s KV cache and/or final hidden state before the LM head, which constitute the model’s internal representation of a processed sequence. Auxiliary Latent-Space Computation is potentially more efficient than strict autoregressive decoding as latent embeddings can encode semantics more compactly than token sequences. Additionally, such processes align more closely with human cognition, in which thought does not occur as an endless verbal monologue, but contains nonverbal stretches of conceptual processing that proceeds without recruiting the language system \citep{AldersonDay2015, FedorenkoVarley2016,Monti2012}.

Prior sequence-level ALSC approaches primarily fall into three categories: token-space latent stepping, activation-space edits, and cache-operators (most commonly compressive schemes such as pruning, merging, or summarising KV entries). An underexplored direction is incorporating processes for memory \emph{consolidation} and \emph{reconsolidation} in the neuroscientific sense. Consolidation is a process in the brain where new memory traces are stabilised upon formation. Reconsolidation refers to rewrites of recalled memories: when a stored memory is reactivated, it can briefly enter a plastic state in which it can be modified before restabilising, allowing it to be updated and recontextualised with new salient information \citep{Lee2009Reconsolidation, Hupbach2007Reconsolidation}. 


In this paper, we explore consolidation and reconsolidation in Transformer LLMs from both a theoretical and architectural standpoint. We adopt a working interpretation in which the KV cache serves as the model’s memory and (re)consolidation is realised through periodic in place edits to that memory during generation. We first offer an information-theoretic justification for why periodically reprocessing the model’s working memory (KV cache) should aid generalisation from the lens of Information Bottleneck theory; concretely, we show that in autoregressively trained models, the KV cache is incentivised to preserve information from the sequence history that is unnecessary for future sequence-level prediction, potentially hindering generalisation. We then introduce the \textit{Bottlenecked Transformer}, which augments a pretrained backbone with a \emph{Cache Processor}, a small Transformer that periodically rewrites recent memories (consolidation) and selectively recalled KV entries (reconsolidation) in-place, without dimensional compression. Our architecture is shown in Figure~\ref{fig:architecture}. Note that whilst our aim is to implement a mechanism functionally analogous to consolidation and reconsolidation in the brain, the underlying biological processes are richer and more complex than our computational abstraction. Empirically, we yield consistent gains over vanilla Transformers across seven mathematical reasoning benchmarks and multiple backbones.


\section{Preliminaries}
In all following sections, we denote random variables by uppercase letters (e.g.\ \(X, Y, Z\)) and realisations by the lowercase letters (e.g.\ \(x, y, z\)). 
\subsection{State-Space formulation of decoding}
Autoregressive decoding in a Transformer can be viewed as a state-space process, where the model’s key–value (KV) cache is its memory state. Formally, let $x_t$ be the input token at decoding step $t$. For a model with $L$ layers, we let $h_t$ denote the KV cache (covering tokens $0:t$), $o_t \in \mathbb{R}^d$ the final-layer residual stream. We then model the next-token decoding process as:
\begin{gather*}
h_t \equiv \big\{(k^{(\ell)}_{0:t},\, v^{(\ell)}_{0:t})\big\}_{\ell=1}^{L} \\
(h_t,\, o_t) = f_{\text{LLM}}(h_{t-1},\, x_t), \qquad p(x_{t+1}\mid h_t, o_t) = \operatorname{softmax}\!\big(f_{\text{head}}(o_t)\big).
\end{gather*}
This view treats $h_t$ as the sequence-level latent state that mediates future predictions, while $o_t$ summarizes the current step’s computation. Under a vanilla LLM, updating $h_t$
amounts to per-layer appends of the key–value vectors produced for the incoming token $x_t$.

\subsection{Sequence-level auxiliary latent-space computation}
We call an \emph{auxiliary latent-space computation} (ALSC) method any inference-time procedure that performs extra computation over internal continuous states adjacent to the standard forward pass of a backbone LLM. We focus on \emph{sequence-level} ALSC that acts on $(h_t,o_t)$ via
\[
(h',\, o') \;=\; \mathcal{T}(h_t,\, o_t),
\]
invoked according to a schedule $s(t)\subseteq\mathbb{N}$ (e.g., periodic every $m$ steps or event-triggered). After applying $\mathcal{T}$, decoding resumes from the transformed state:
\[
p(x_{t+1}\mid h', o') \;=\; \operatorname{softmax}\!\big(f_{\text{head}}(o')\big).
\]

\section{Related Work}
\label{related}

We classify sequence-level ALSC works into three execution pathways: (i) Token-mediated, (ii) residual-operator, and (iii) cache-operator.

\paragraph{(i) Token-mediated.}
These methods instantiate $\mathcal{T}$ to be an LLM (often the backbone LLM itself), and operate via an internal micro-sequence of latent tokens, thereby lengthening the cache and updating $(h_t,o_t)$ via a standard forward pass. Basic variants inject pause or filler tokens during decoding \citep{pause,filler}. Cache Deliberation uses an external coprocessor (often initialized from the backbone) to produce latent embeddings conditioned on $h_t$ and append them to the cache \citep{liu2024deliberationlatentspacedifferentiable}. Other approaches recycle the model’s last hidden state as a continuous latent fed back as the next input embedding, taking multiple latent steps without emitting text until termination \citep{coconut,codi,tokensassorted}. 

\paragraph{(ii) Residual-operator.}
A complementary line defines $\mathcal{T}$ to only modify the current hidden representation $o_t$ before the LM head, leaving $h_t$ unchanged. \emph{Activation steering} adds structured directions to $o_t$ (or selected layers) to influence style, stance, or safety \citep{turner2024steeringlanguagemodelsactivation}, including Contrastive Activation Addition \citep{panickssery2024steeringllama2contrastive} and exemplar-derived ``style vectors'' \citep{konen2024stylevectorssteeringgenerative}. Recent work targets sparse features for precision and interpretability, e.g., SAE-targeted steering \citep{chalnev2024improvingsteeringvectorstargeting}, operations directly in SAE latent space (FGAA) \citep{soo2025interpretablesteeringlargelanguage}, and broader SAE-based frameworks \citep{he2025saifsparseautoencoderframework}.

\paragraph{(iii) Cache-operator.}
In these methods, $\mathcal{T}$ operates solely to transform the memory $h_t$. Here, the cache is transformed directly between decoding steps to control what information remains accessible. Existing methods are predominantly centred on memory compression for long context tasks.  \emph{Eviction/pruning} methods retain high-utility entries using importance or heavy-hitter policies, preserving pivotal tokens and stable ``sink'' anchors \citep{zhang2023h2oheavyhitteroracleefficient,xiao2024efficientstreaminglanguagemodels}. 
\emph{Merging/aggregation} mechanisms fuse entries into representatives under a memory budget \citep{pmlr-v235-zhang24n,wang2024modeltellsmergeadaptive}. 
\emph{Recurrent architectures} summarize older activations into compact memories or memory banks (Transformer-XL, Compressive Transformers, RMT) \citep{dai-etal-2019-transformer,rae2019compressivetransformerslongrangesequence,Bulatov2022rmt}. 
\emph{Selective recall} architectures externalise long histories to memory banks and re-inject salient slices on demand \citep{emllm2024episodic}.

\paragraph{Positioning.} Memory (re)consolidation as we interpret it belongs to the cache-operator family, but differs from the predominantly compression-oriented approaches above. In-place memory rewrites under (re)consolidation do not necessarily entail reduction in memory footprint. Rather, our goal is to demonstrate how KV rewrites may improve reasoning performance in Transformer LLMs.

\section{Motivation and Theory}
\label{theory}
In this section, we give a theoretical analysis as to why a mechanism for (re)consolidation via KV rewrites is likely to improve on performance on reasoning tasks in Transformer decoder-only LLMs, from the perspective of Information Bottleneck Theory. All proofs are given in Appendix~\ref{proofs}.

\begin{figure}[t]
  \centering
  \includegraphics[width=\linewidth]{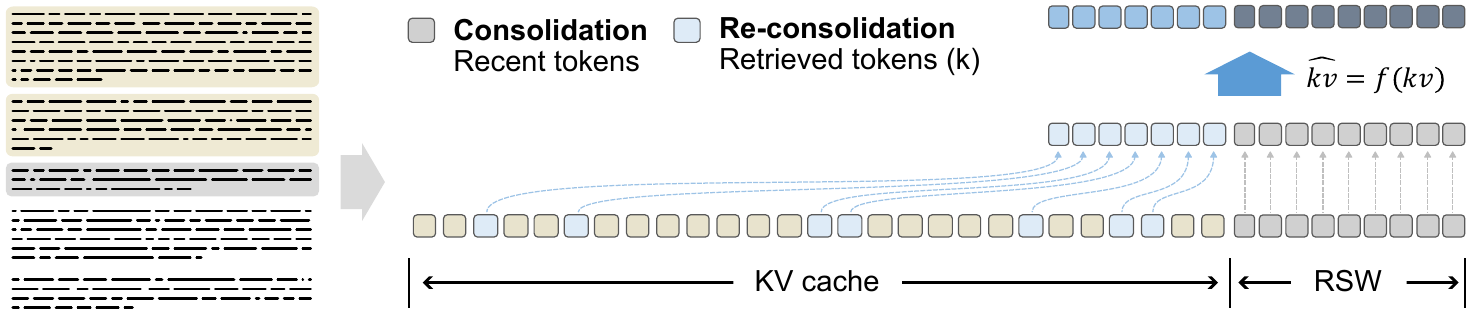}
   \caption{Bottlenecked Transformer architecture, consisting of a backbone LLM processing/generating tokens, and Transformer Cache Processor that rewrites KV entries. The Cache Processor is invoked each time a newline token is generated (marking the end of a reasoning step). When invoked, recent tokens (from the recent step window in grey) and $k$ retrieved tokens beyond the RSW (in blue) are passed in parallel to the Cache Processor, and rewritten in-place.}
   \label{fig:architecture}
\end{figure}

\subsection{The Information Bottleneck Method}
The Information Bottleneck (IB) is a framework for optimising some latent variable Z to be maximally informative of some output variable Y and minimally informative of an input variable X, via the objective:
\begin{align}
\mathcal{L}[p(z|x)] = I(X;Z) - \beta\,I(Z;Y)
\label{bottleneck-objective}
\end{align}
subject to \(Y \leftrightarrow X \leftrightarrow Z\).  Here \(\beta>0\) balances \emph{information compression} via lowering \(I(X;Z)\) with \emph{relevance} \(I(Z;Y)\)~\citep{tishby2000information}.  Controlling \(I(X;Z)\) has been proven to bound test set generalization error as \(\epsilon \le O\bigl(\sqrt{(I(X;Z)+1)/n}\bigr)\) for i.i.d. data~\citep{kawaguchi2023information}, with strong empirical results that suggests this extends to non-i.i.d. time series data~\citep{feng2024timesieveextractingtemporaldynamics, liu2024timexlearningtimeseriesexplanations, e25050831, choi2024conditional}. 


\subsection{Information Bottlenecks and Deep Learning}
\label{sec2.2}

In practice, one can approximate true distributions $p(z|x)$ and $p(y|x)$ by parameterized distributions \(p_\phi(z\mid x)\) and \(p_\psi(y\mid z)\).  The joint model $p_\theta(x, y, z)$ now factorizes as:
\begin{align}
\label{bottleneck-factor}
    p_\theta(x, y, z) = p(x)p_\phi(z|x)p_\psi(y|z)
\end{align}
and we optimize \(\mathcal{L}\) with respect to \(\theta=(\phi,\psi)\). To ground these ideas, we now introduce three formal definitions that capture the essential properties and ordering of information bottlenecks in neural networks.

\newtheorem{definition}{Definition}[section]

\label{bottleneck}
\begin{definition}[Neural Information Bottleneck]
Let $\mathcal{M}_\theta$ be a neural network parameterised by $\theta$, with input/output variables $(X, Y)$, and let \(Z\) be a latent variable within the model. Then, $Z$ is an information bottleneck in $\mathcal{M}_\theta$ if and only if $Z$ satisfies the Markov chain $X \to Z \to Y$.
\end{definition}
\begin{definition}[Ordering of Bottlenecks in Neural Networks]\label{order}
Let $\{Z_i\}_{i \in I}$ be a set of distinct information bottlenecks in $\mathcal{M}_\theta$. We say that a bottleneck $Z_j$ is \emph{deeper} than another bottleneck $Z_i$, denoted by $Z_i \prec Z_j$, if and only if $X \rightarrow Z_i \rightarrow Z_j \rightarrow Y$.
\end{definition}
\begin{definition}[Terminal Bottleneck]
    Let $\mathcal{Z}^{\mathcal{M}_\theta}$ be the set of all information bottlenecks in $\mathcal{M}_\theta$. Then $\hat{Z} = \max\mathcal{Z}^{\mathcal{M}_\theta}$ is denoted the terminal bottleneck in $Z^{\mathcal{M}_\theta}$.
\end{definition}
The notion of ordering of bottlenecks allows for the observation that the complexity $I(X;Z)$ of any arbitrary bottleneck $Z$ is bounded by $I(X;\hat{Z})$, the complexity of the terminal bottleneck, formalised in Lemma~\ref{thm:last-bottleneck-bound}. 
\newtheorem{lemma}{Lemma}[section]
\begin{lemma}\label{thm:last-bottleneck-bound}
    Let $\mathcal{M}_\theta$ be a model parameterized by $\theta$, with input/output variables $(X, Y)$, and let $\mathcal{Z}^{\mathcal{M}_\theta}$ be the set of information bottlenecks in $\mathcal{M}_\theta$, with $\hat{Z}$ defining the terminal bottleneck in $\mathcal{M}_\theta$. Then $I(X;Z) \geq I(X;\hat{Z})$ for any bottleneck $Z \in \mathcal{Z}^{\mathcal{M}_\theta}$.
\end{lemma}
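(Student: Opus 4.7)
The plan is to reduce the statement to a single application of the Data Processing Inequality (DPI). The only substantive content in Lemma~1.2.1 is that ``deeper'' bottlenecks, in the sense of Definition~\ref{order}, are more compressed with respect to the input. Given the definitions the paper has already set up, this is exactly the DPI in disguise.

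First I would split into the two obvious cases. If $Z = \hat{Z}$, the inequality holds with equality and there is nothing to prove. Otherwise $Z$ and $\hat{Z}$ are distinct members of $\mathcal{Z}^{\mathcal{M}_\theta}$, and since $\hat{Z} = \max \mathcal{Z}^{\mathcal{M}_\theta}$ under the ordering $\prec$, we have $Z \prec \hat{Z}$. Unfolding Definition~\ref{order} of the ordering then gives the Markov chain
\[
X \to Z \to \hat{Z} \to Y.
\]
In particular, restricting to the first three variables yields the sub-chain $X \to Z \to \hat{Z}$.

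Second, I would apply DPI to this sub-chain: whenever $X \to Z \to \hat{Z}$ is Markov, $I(X;Z) \geq I(X;\hat{Z})$, because $\hat{Z}$ is (conditional on $Z$) independent of $X$ and thus cannot contain more information about $X$ than $Z$ itself. Combining with the trivial case closes the proof.

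The only place where anything could go wrong is justifying that $\max \mathcal{Z}^{\mathcal{M}_\theta}$ is well-defined and that $Z \prec \hat{Z}$ for every non-terminal $Z$ — i.e.\ that the ordering is total on the bottlenecks actually realised in $\mathcal{M}_\theta$. For decoder-only Transformers this will follow from the layerwise/causal structure of the computation graph (bottlenecks are linearly ordered along the forward pass), and the paper has implicitly assumed this by writing $\max$. Once that bookkeeping is accepted, the argument is a one-line DPI, which is why I expect no genuine obstacle beyond making the Markov sub-chain $X \to Z \to \hat{Z}$ explicit.
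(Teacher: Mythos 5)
Your proposal is correct and is essentially the paper's own proof: handle $Z=\hat{Z}$ trivially, use $Z \prec \hat{Z}$ from the definition of the terminal bottleneck to get the Markov chain $X \to Z \to \hat{Z} \to Y$, and apply the Data Processing Inequality to the sub-chain $X \to Z \to \hat{Z}$. Your side remark about the well-definedness of $\max \mathcal{Z}^{\mathcal{M}_\theta}$ (totality of $\prec$ on realised bottlenecks) is a reasonable observation that the paper likewise leaves implicit, but it does not change the argument.
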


\paragraph{Implicit Information Compression During SGD}
Even without an explicit IB loss, noise inherent to stochastic gradient descent (SGD) has been shown to implicitly minimise \(I(X;Z)\) in neural networks. During training, after an initial “fitting” phase, SGD hase been shown to enter a low-signal-to-noise “diffusion” regime in which gradient noise dominates, systematically compressing input information in hidden representations \citep{DBLP:journals/corr/Shwartz-ZivT17, butakov2024informationbottleneckanalysisdeep}.

\subsection{IB Objective for Generalised Language Reasoners}

The problem of learning a generalised language-based reasoner can be formulated as one of learning a generalised sequence to sequence model via the IB objective. Given an input $X = S_{0:n}$ (a reasoning history of $n$ transitions), and $Y = S_{n+1}$ (a subsequent reasoning step), we seek to train a model $\mathcal{M}_\theta$ aiming to predict $S_{n+1}$ given $S_{0:n}$. Under the IB method, given a neural information bottleneck $Z$ in $\mathcal{M}_\theta$ that sequentially mediates $(S_{0:n}, S_{n+1})$, we define the IB objective:
\begin{align}
\label{seq objective}
    \theta^* = \arg \min_{\theta} I(S_{0:n};Z) - \beta\, I(Z;S_{n+1})
\end{align}
Under this objective, a realised latent $z$ should act as \emph{abstraction} of the reasoning history to infer a generalised state of a partial solution, from which some logical rule of inference can be applied.

\subsection{Analysis of Information Bottlenecks in Decoder-Only Transformer LLMs}

Our first main result is given in Theorems \ref{thm:kv-cache-terminal} and \ref{thm:reasoning_traces}. We show that in a decoder-only Transformer, given a input sequence $S_{0:n}$ and output sequence $S_{n+1}$, the KV cache and last hidden state computed from $S_{0:n}$ forms the terminal bottleneck $\hat{Z}$ mediating these sequences, and autoregressive training maximises both $I(S_{0:n};\hat{Z})$ and $I(\hat{Z};S_{n+1})$.
\newtheorem{theorem}{Theorem}[section]

\begin{theorem}[KV-Cache and Final Hidden State as Seq-to-Seq Terminal Bottleneck]
\label{thm:kv-cache-terminal}
    Let \(\mathcal{M}_\theta^{LLM}\) be a decoder-only Transformer language model parameterized by \(\theta\), with input/output sequence variables \((S_{0:n},S_{n+1})\). We define the information bottleneck $C_{0:n}$ as:
    \[
        C_{0:n}\;=\; (\,K_{0:n},\, V_{0:n},\, O_n)
    \]
    where \(K_{0:n}\) and \(V_{0:n}\) are represent keys and values computed from \(S_{0:n}\) across all heads/layers, and \(O_n\) is the final hidden-state vector of the last token of \(S_{0:n}\) prior to the model's final logit projection. Then $C_{0:n} = \hat{Z}$, the terminal bottleneck in \(\mathcal{M}_\theta^{LLM}\).
\end{theorem}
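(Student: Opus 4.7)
The plan is to split the claim into two parts: (i) $C_{0:n}$ is itself a bottleneck, i.e.\ the Markov chain $S_{0:n} \to C_{0:n} \to S_{n+1}$ holds, so $C_{0:n} \in \mathcal{Z}^{\mathcal{M}_\theta^{LLM}}$; and (ii) every other admissible $Z \in \mathcal{Z}^{\mathcal{M}_\theta^{LLM}}$ satisfies $Z \prec C_{0:n}$, so that $C_{0:n}$ is the maximum element under the ordering of bottlenecks and hence terminal.

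For (i), I would read the Markov property directly off the autoregressive decoding procedure. Once $C_{0:n}$ is fixed, the first token of $S_{n+1}$ is sampled from the softmax of the unembedding projection of $O_n$, and each subsequent token is generated by a forward pass whose only inputs from the past are the cached keys and values $K_{0:n},V_{0:n}$; the additional KV entries written during generation are functions of the already-generated prefix of $S_{n+1}$ alone, not of $S_{0:n}$. This gives $p(S_{n+1} \mid S_{0:n}, C_{0:n}) = p(S_{n+1} \mid C_{0:n})$, which is exactly the required conditional independence.

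For (ii), let $Z$ be any other bottleneck. The IB side condition $Y \leftrightarrow X \leftrightarrow Z$ forbids $Z$ from depending on tokens of $S_{n+1}$, so $Z$ must be an internal latent produced by the forward pass on $S_{0:n}$ alone---a computation that, by architectural design, terminates at exactly the quantities comprising $C_{0:n}$: every layer's keys and values at every input position, and the top-layer hidden state at position $n$, which are precisely what the generation phase consumes. Any such $Z$ is therefore either already a component of $C_{0:n}$ or is produced at an earlier point in the forward pass, so $C_{0:n}$ can be recovered from $Z$ by deterministically completing the remainder of the forward pass. This yields $H(C_{0:n} \mid Z) = 0$ and hence $S_{0:n} \to Z \to C_{0:n}$; chaining with (i) gives $S_{0:n} \to Z \to C_{0:n} \to S_{n+1}$, i.e.\ $Z \prec C_{0:n}$.

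The main obstacle is pinning down the notion of ``latent variable within the model'' precisely enough to exclude pathological candidates---for example, predictive-equivalence classes of $C_{0:n}$, which would sit strictly downstream of $C_{0:n}$ in information flow while still satisfying the Markov property and would defeat the terminality claim. I expect to resolve this by restricting $\mathcal{Z}^{\mathcal{M}_\theta^{LLM}}$ to the natural architectural activations of the model (layer outputs, attention heads, residual streams), so that no admissible latent can sit strictly between the end of the forward pass on $S_{0:n}$ and the start of the generation of $S_{n+1}$. A subordinate technical point is stochasticity: since $C_{0:n}$ is a deterministic function of $S_{0:n}$ whereas $Z$ may be stochastic, the recovery step is cleanest when phrased via the entropy identity $H(C_{0:n} \mid Z)=0$ and the data processing inequality rather than pointwise functional equalities.
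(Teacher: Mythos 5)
Your proposal is correct (at least to the paper's own level of rigor) but is organised differently from the paper's proof. The paper argues by contradiction: it supposes a bottleneck $Z'$ strictly deeper than $C_{0:n}$ exists, notes that depth would allow $C_{0:n}$ to be discarded when predicting $S_{n+1}$, and then appeals to the architectural facts that decoding is conditioned directly on $C_{0:n}$ and that no further processing occurs on $C_{0:n}$ after it is constructed, so nothing strictly deeper can exist; it never separately verifies that $C_{0:n}$ is a bottleneck, and it only establishes maximality. You instead give a direct two-part argument: (i) the generation procedure reads only $C_{0:n}$, giving $S_{0:n}\to C_{0:n}\to S_{n+1}$, and (ii) every admissible bottleneck $Z$ lies upstream, with $H(C_{0:n}\mid Z)=0$, so $S_{0:n}\to Z\to C_{0:n}\to S_{n+1}$ and $Z\prec C_{0:n}$ for all $Z$. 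This buys something the paper's proof does not: it establishes $C_{0:n}$ as the \emph{maximum} of $\mathcal{Z}^{\mathcal{M}_\theta^{LLM}}$, which is what the definition $\hat{Z}=\max\mathcal{Z}^{\mathcal{M}_\theta^{LLM}}$ actually requires (the paper's contradiction only rules out strictly deeper elements), and it makes explicit the restriction to architectural activations needed to exclude pathological latents sitting downstream of $C_{0:n}$, a point the paper handles only with the informal remark that no further processing occurs. One step of yours is slightly loose: an upstream latent (e.g.\ a mid-layer residual-stream cut) does not in general determine the earlier layers' keys and values, so $H(C_{0:n}\mid Z)=0$ does not follow merely from $Z$ being "produced earlier"; the correct dichotomy is that such a latent either determines $C_{0:n}$ (a full cut of the computation graph, in which case your argument goes through) or fails the Markov condition $S_{0:n}\to Z\to S_{n+1}$ because decoding attends to prefix keys/values not recoverable from $Z$, and hence is not a bottleneck at all. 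With that dichotomy spelled out, your route is if anything tighter than the paper's, while resting on the same architectural premises.
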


\begin{theorem}[Autoregressive Training Encourages high $I(S_{0:n};\hat{Z})$ and $I(\hat{Z};S_{n+1})$]
\label{thm:reasoning_traces}
    Let $s_{0:N}$ be a complete reasoning trace drawn from $p(s_{0:N})$. For some $n$ (where $0 \le n < N$), we define $(s_{0:n}, s_{n+1})$ to be an input/output pair corresponding to an incomplete reasoning history and ground-truth next reasoning step. Let $\mathcal{M}_\theta^{LLM}$ be a decoder-only Transformer that maps input $s_{0:n}$ to KV cache/final hidden state $c_{0:n} = (k_{0:n},\, v_{0:n},\,o_n)$ via a determinstic mapping $f_\phi$, where $\phi \subset \theta$:
    \begin{align}
         c_{0:n} = f_\phi(s_{0:n})
    \end{align}
    Let $L(\theta)$ be the expected next step log-likelihood to maximise (computed via negative cross entropy) with respect to parameters $\theta$:
\begin{align}
        L(\theta) 
        \;=\; 
        \mathbb{E}_{p(s_{0:N})}\Bigl[
            \sum_{n=0}^{N-1} \log p_\theta\bigl(s_{n+1} \,\big|\; s_{0:n}\bigr)
        \Bigr]
\end{align}
Then we can show two bounds on $L(\theta)$:
\begin{align}
    L(\theta) \leq \sum_{n=1}^{N} I(S_{0:{n}}; C_{0:n}) - \sum_{n=0}^{N-1} H(S_{n+1}|S_{0:n}) 
\end{align}
\begin{align}
    L(\theta) \leq \sum_{n=0}^{N-1} I(C_{0:n}; S_{n+1}) - H(S_{n+1})
\end{align}
\end{theorem}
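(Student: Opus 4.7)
I would prove the bound in two steps: first, use Gibbs' inequality to show $L(\theta)\leq\sum_{n=0}^{N-1}[I(C_{0:n};S_{n+1})-H(S_{n+1})]$, and second, show that $J(\theta)$ in turn dominates this intermediate expression using only non-negativity of mutual information and the data-processing inequality inherited from the determinism of $f_\phi$.

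The key ingredients both come from $c_{0:n}=f_\phi(s_{0:n})$ being deterministic: (i) the model's predictive distribution factors as $p_\theta(s_{n+1}\mid s_{0:n})=p_\theta(s_{n+1}\mid c_{0:n})$, which is essentially the content of Theorem~\ref{thm:kv-cache-terminal}; and (ii) $I(S_{0:n};C_{0:n})=H(C_{0:n})$, since $H(C_{0:n}\mid S_{0:n})=0$. Using (i), each summand of $L(\theta)$ can be rewritten as $\mathbb{E}_{p(c_{0:n})}\mathbb{E}_{p(s_{n+1}\mid c_{0:n})}[\log p_\theta(s_{n+1}\mid c_{0:n})]$, and Gibbs' inequality applied in the inner expectation then gives
\[
L(\theta)\;\leq\;-\sum_{n=0}^{N-1}H(S_{n+1}\mid C_{0:n})\;=\;\sum_{n=0}^{N-1}\bigl[I(C_{0:n};S_{n+1})-H(S_{n+1})\bigr].
\]

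To close the remaining gap, I would algebraically rearrange $J(\theta)$ minus this intermediate bound, using the identity $I(C_{0:n};S_{n+1})-H(S_{n+1})=-H(S_{n+1}\mid C_{0:n})$, to obtain
\[
J(\theta)-\sum_{n=0}^{N-1}\bigl[I(C_{0:n};S_{n+1})-H(S_{n+1})\bigr]
=\tfrac{1}{2}\sum_{n=1}^{N}I(S_{0:n};C_{0:n})+\tfrac{1}{2}\sum_{n=0}^{N-1}\bigl[H(S_{n+1}\mid C_{0:n})-H(S_{n+1}\mid S_{0:n})\bigr].
\]
The first sum is non-negative by definition. For the second, determinism of $f_\phi$ makes $S_{n+1}\to S_{0:n}\to C_{0:n}$ a Markov chain, so data processing yields $H(S_{n+1}\mid C_{0:n})\geq H(S_{n+1}\mid S_{0:n})$. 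Chaining the two bounds establishes $L(\theta)\leq J(\theta)$.

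The only delicate point is the direction in which Gibbs' inequality is deployed. The tighter direct bound $L(\theta)\leq-\sum H(S_{n+1}\mid S_{0:n})$ yields a residual of the \emph{wrong} sign after subtracting from $J(\theta)$, because data processing pushes the resulting difference in the opposite direction. Routing Gibbs through $C_{0:n}$ instead—admissible precisely because the model's prediction already factors through the cache—flips that sign and turns the residual into a manifestly non-negative quantity. Beyond this, the proof is essentially bookkeeping.
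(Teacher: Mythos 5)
Your proof is correct, but its second half takes a genuinely different route from the paper's. The first step coincides: like the paper, you use the fact that the model's prediction factors through the cache together with Gibbs' inequality to obtain $L(\theta)\le\sum_{n=0}^{N-1}\bigl[I(C_{0:n};S_{n+1})-H(S_{n+1})\bigr]$. The paper then derives a \emph{second, independent} upper bound on $L(\theta)$ — writing the loss as an expectation of $\log p_\theta(s_{n+1}\mid s_{0:n},k_{0:n+1},v_{0:n+1})$, passing to conditional mutual information and the chain rule to reach $\sum_{n=1}^{N}I(S_{0:n};C_{0:n})-\sum_{n=0}^{N-1}H(S_{n+1}\mid S_{0:n})$ — and gets $J(\theta)$ by averaging the two bounds. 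You instead dominate your single bound directly: your rearrangement $J(\theta)-\sum_{n}\bigl[I(C_{0:n};S_{n+1})-H(S_{n+1})\bigr]=\tfrac12\sum_{n=1}^{N}I(S_{0:n};C_{0:n})+\tfrac12\sum_{n}\bigl[H(S_{n+1}\mid C_{0:n})-H(S_{n+1}\mid S_{0:n})\bigr]$ checks out, non-negativity of mutual information handles the first term, and the data-processing inequality applied to $S_{n+1}\to S_{0:n}\to C_{0:n}$ (valid since $C_{0:n}=f_\phi(S_{0:n})$) handles the second. Your route is more economical — one Gibbs step plus DPI — and it sidesteps the paper's step asserting $L(\theta)=-\sum_n H(S_{n+1}\mid S_{0:n},K_{0:n+1},V_{0:n+1})$, which strictly holds only with an inequality (harmless in direction, but your version never needs it); it also establishes the slightly stronger fact that the relevance-only expression alone already dominates $L(\theta)$, revealing the $\sum_n I(S_{0:n};C_{0:n})$ term in $J(\theta)$ as pure slack, whereas the paper's symmetric averaging keeps both sums visibly in play, matching its narrative that training pressures both mutual-information terms. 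Two cosmetic points: your advertised ingredient $I(S_{0:n};C_{0:n})=H(C_{0:n})$ is never actually used (non-negativity suffices), and your closing observation — that routing Gibbs through the prefix rather than the cache leaves a residual of indeterminate sign — is accurate and is precisely why your ordering of the two steps is the right one.
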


Under Theorem \ref{thm:reasoning_traces}, since $L(\theta)$ acts as a bound on each term (where the entropy terms are fixed under a particular dataset), maximisation of $L(\theta)$ thus acts to encourage raising both mutual information components $I(C_{0:n}; S_{n+1})$ and $I(S_{0:{n}}; C_{0:n})$. When we consider the tokenwise view, with $C_{0;t}$ representing tokens at timesteps $0$ to $t$, we can see that $C_{0:t}$ contains, as sub-states, each earlier cache $C_{0:i}$ for $i < t$, and so contains sufficient information to recover the full collection of next-token predictors $\{p_\theta(S_{i+1} \mid C_{0:i})\}_{i < t}$ realised along the sequence. Consequently, the final cache encodes a high-fidelity, step-by-step predictive trace of the right-shifted tokens $(S_1,\dots,S_t)$, rather than a single compressed summary of the past.

Combined with Lemma~\ref{thm:last-bottleneck-bound} and Theorem~\ref{thm:kv-cache-terminal}, this implies that autoregressive training encourages internal sequence representations that are \emph{minimally} compressive of their inputs as well as maximally predictive of future outputs.


\begin{wrapfigure}[30]{R}{0.38\textwidth}  
  \centering
  \vspace{-2.4\baselineskip}               
  \includegraphics[width=0.38\textwidth]{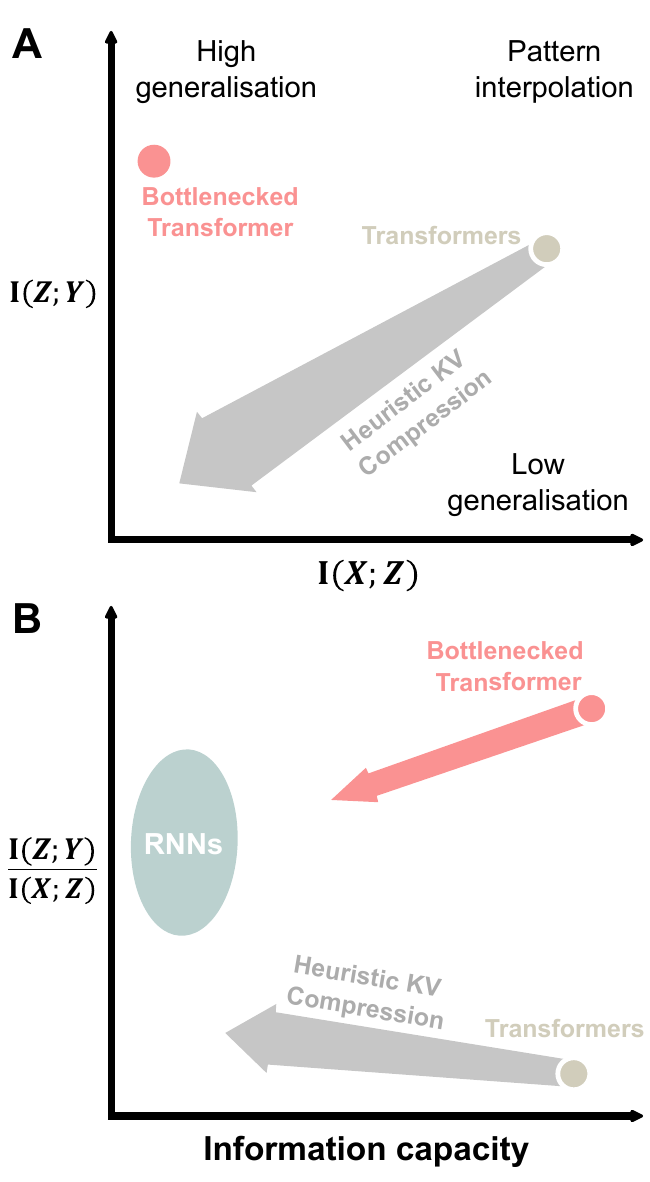}
  \caption{Conceptual illustration. (\textbf{A}) Bottlenecked Transformers balance input compression $I(X;Z)$ with predictive information $I(Z;Y)$ for high generalisation. (\textbf{B}) This achieves superior predictive efficiency $I(Z;Y)/I(X;Z)$ vs. capacity over other methods.}
  \label{fig:conceptual}
\end{wrapfigure}

\subsection{Comparisons with RNNs and Cache Compression Methods}
Transformers excel at retrieval-style tasks, due to their effectively unbounded memory,
while RNNs and structured state-space models often outperform on problems requiring systematic rule application or OOD generalisation \citep{deletang2023neural,liu2023transformers,wen2025rnns}.
Due to the hard sequence-level bottleneck imposed by RNNs via their fixed-size hidden state, latent representations are forced to be reprocessed and compressed at every time step, whereas standard Transformers’ ever-growing cache removes this constraint completely.
This leads to mutual information between given inputs ($X$) and these compressed latents ($Z$) that is reduced relatively to the one between latents $Z$ and predicted outputs ($Y$), compared to Transformers.
See Fig.~\ref{fig:conceptual}.B for a conceptual illustration of this comparison.

 Existing cache-operator methods have largely been explored from the perspective of compression via memory footprint reduction (see Section~\ref{related}). As illustrated conceptually in Fig.~\ref{fig:conceptual} A, these algorithms tend to reduce not only the information retained about the input ($I(X;Z)$), but also indiscriminately reduce predictive information ($I(Z;Y)$), thereby moving towards a region of lower generalised performance on benchmark tasks. Crucially, these methods lack a reprocessing step designed to selectively compress $I(X;Z)$ while preserving or enhancing $I(Z;Y)$. Consequently, without a mechanism to substantially improve predictive efficiency ($\frac{I(Z;Y)}{I(X;Z)}$), as depicted in Fig.~\ref{fig:conceptual}.B, these techniques offer limited improvements in generalisability.

\subsection{Architectural Solutions}
Our analysis shows that the KV cache (together with the final hidden state) forms the terminal bottleneck \(\hat{Z}\) and, in practice, carries extraneous detail from processed sequences. This motivates an inference-time mechanism that rewrites KV entries in place, producing a new bottleneck \(\hat{Z}'=\mathcal{T}(\hat{Z})\) with an increase in predictive efficiency \(I(\hat{Z}';Y)/I(X;\hat{Z}')\). By the data-processing inequality, any such transformation results in $I(X;\hat{Z}) \geq (X;\hat{Z}')$. By training \(T\) to minimise future prediction error, we preserve or improve \(I(\hat{Z}';Y)\). Conceptually, we interpret this as analogous to consolidation/reconsolidation: selectively reprocessing working memory to discard irrelevant information and maintain salient information. We focus on rewriting KVs rather than the final hidden state as the cache is the component principally responsible for retaining the extraneous sequence information. We apply no dimensionality reduction in rewritten KVs so as to avoid indiscriminate reduction in predictive information that plagues compression methods.



\section{Bottlenecked Transformers}
\label{sec:bottlenecked}
\paragraph{Overview.} Here we introduce the \emph{Bottlenecked Transformer}. We augment a pretrained decoder-only Transformer $\mathcal{M}^{\mathrm{LLM}}_\theta$ with an external Cache Processor $\mathcal{T}^{\mathrm{proc}}_\omega$, a neural module (smaller than the backbone) that periodically rewrites KV cache entries in-place during autoregressive generation. A illustration of our architecture can be seen in Figure~\ref{fig:architecture}.


\paragraph{Processor Invocation and Mechanism.} During generation, immediately after some reasoning step $s_n$ completes (detected by emission of a newline token), the Processor is invoked to rewrite cache entries. Decoding then resumes conditioned on the rewritten cache. We design our rewrite mechanism to be analogous to memory consolidation/reconsolidation in the brain, and implement a selective mechanism for rewriting cache entries. When invoked, the Processor rewrites (i) cache entries corresponding to the most recent segment $s_n$, and (ii) the top $k$ entries from the prior step history $s_{0:n-1}$ by attention mass with the recent segment $s_n$. These components realise mechanisms that are respectively analogous to consolidation and reconsolidation: new memories within a recent step window (RSW) of variable length $R$ undergo a stabilisation process, and recalled memories are rewritten in light of new information. Formally, we designate the set of recalled and recent KV entries as $(k_{(s)}, v_{(s)})$. All other KV entries are left unchanged. A more detailed formulation of our selection mechanism can be found in Appendix~\ref{consolidation-mech}.

\paragraph{Cache Processor Architecture.}
For a backbone $\mathcal{M}^{\mathrm{LLM}}_\theta$ with $L$ layers and $H$ heads, the Cache Processor consists of $L$ small Transformer blocks
$\{\mathcal{T}^{\mathrm{proc},(\ell)}_\omega\}_{\ell=1}^L$, one aligned to each backbone layer. Block $\ell$
operates only on the corresponding layer's selected KV entries $(k_{(s)}^{(\ell)},v_{(s)}^{(\ell)})$. We first convert the selected key–value pairs into “KV-tokens” by concatenating across all heads for that layer, and project them via a learnable matrix into the Processor’s hidden state space:
\begin{align}
x^{(\ell)} \;&=\; \big(k_{(s)}^{(\ell)},v_{(s)}^{(\ell)}\big)
\in \mathbb{R}^{(k+R)\times 2H d_k},\\[2pt]
u^{(\ell)} \;&=\; x^{(\ell)} W^{(\ell)}_{\mathrm{in}}, \qquad
W^{(\ell)}_{\mathrm{in}}\in\mathbb{R}^{2H d_k\times d_p}.
\end{align}
The sequence $u^{(\ell)}$ (consisting of recalled and recently cached memories) is then processed in parallel by a small Transformer block without causal masking, such that selected KV entries may be updated with globally available information. The block’s output is projected back via learnable matrix to the KV dimensionality. Finally, we apply a gated, in-place residual rewrite to the selected KV entries.
\begin{gather}
\tilde{\Delta}^{(\ell)} = \mathcal{T}^{\mathrm{proc},(\ell)}_\omega\!\big(u^{(\ell)}\big) \\[2pt]
\big(\Delta^{(\ell)}_k, \Delta^{(\ell)}_v\big)
= \tilde{\Delta}^{(\ell)} W^{(\ell)}_{\mathrm{out}} \qquad
W^{(\ell)}_{\mathrm{out}}\in\mathbb{R}^{d_p\times 2H d_k} \\[2pt]
k_{(s)}^{(\ell)} \leftarrow k_{(s)}^{(\ell)}
+ \sigma\!\big(g^{(\ell)}\big)\,\Delta^{(\ell)}_k \qquad
v_{(s)}^{(\ell)} \leftarrow v_{(s)}^{(\ell)}
+ \sigma\!\big(g^{(\ell)}\big)\,\Delta^{(\ell)}_v
\end{gather}
Here $g^{(\ell)}\in\mathbb{R}$ is a learnable, layer-wise scalar gate initialized small and
$\sigma$ denotes the logistic function. The gate mitigates early drift in model capabilities, i.e., large, destabilizing cache changes before the Processor has learned useful updates. 

\begin{table}[t!]
\centering
\setlength{\tabcolsep}{3.6pt}
\scalebox{0.72}{
\begin{tabularx}{19.5cm}{p{1.8cm}|l|*{7}{Y}}
\toprule
\multirow{2}{*}{\textbf{\parbox[t]{1.1cm}{\raggedright Base LLM}}} & \multirow{2}{*}{\textbf{Method}} & \multicolumn{7}{c}{\textbf{Task}} \\
& & \textbf{GSM8K} & \textbf{MATH} & \textbf{SVAMP} & \textbf{TheoremQA} & \textbf{LogiQA} & \textbf{Gaokao-MathQA} & \textbf{GSM-Hard} \\
\midrule
\multirow{4}{*}{\parbox[t]{1.1cm}{\raggedright Llama 3.2 1B}} 
& SFT     & 29.80 & 11.76 & 38 &  8.84 & 15.36 & 3.70 & 7.13 \\
& SFT w/ pause tokens    & 30.02 & 11.34 & 41.6 &  7.22 & 13.36 & 1.71 & 7.96 \\
& SFT w/ latent rollout  & 24.41 &  9.66 & 32.8 &  8.97 & 15.36 & 1.13 & 5.31\\
& Bottlenecked Transformer (ours) & \textbf{32.97} & \textbf{12.72} & \textbf{44.6} & \textbf{10.84} & \textbf{19.05} & \textbf{3.99} & \textbf{7.96} \\
\midrule
\multirow{4}{*}{\parbox[t]{1.1cm}{\raggedright Llama 3.1 8B}} 
& SFT        & 70.28 & 31.88 & 77.7 & 15.26 & 20.74 & 4.84 & 19.41 \\
& SFT w/ pause tokens      & 69.22 & 31.52 & 77.2 & \textbf{15.93} & 20.12 & \textbf{5.98} & 18.88 \\
& SFT w/ latent rollout  & 4.02 &  2.80 & 7.80 &  5.89 & 0.00 & 0.00 & 0.61\\
& Bottlenecked Transformer (ours)      & \textbf{71.87} & \textbf{31.96} & \textbf{78.4} & \textbf{15.93} & \textbf{23.81} & 3.99 & \textbf{19.93}\\
\midrule
\multirow{4}{*}{\parbox[t]{1.1cm}{\raggedright Qwen 3 0.6B}} 
& SFT         & 53.75 & 26.68 & 60.7 & 14.32 & 23.04 & \textbf{5.70} & 19.56\\
& SFT w/ pause tokens      & 52.92 & 26.76 & 60.3 & \textbf{14.73} & 21.50 & 5.13 & 19.26\\
& SFT w/ latent rollout  & 47.23 &  20.28 & 57.70 & 12.32 & 24.27 & 3.42 & 17.66 \\
& Bottlenecked Transformer (ours)       & \textbf{57.01} & \textbf{29.08} & \textbf{65.4} & \textbf{14.73} & \textbf{26.57} & 5.41 & \textbf{20.55}\\
\midrule
\multirow{4}{*}{\parbox[t]{1.1cm}{\raggedright Llama 3.2 3B}} 
& SFT          & 46.78 & 18.40 & 55.5 & 10.71 & \textbf{22.12} & 3.13 & 11.45 \\
& SFT w/ pause tokens     & 48.07 & 18.00 & 55.9 & 12.05 & 17.67 & \textbf{4.84} & 11.6 \\
& SFT w/ latent rollout   & 42.46 & 15.28 & 52.0 & 11.65 & 12.90 & 2.56 & 10.61\\
& Bottlenecked Transformer (ours)      & \textbf{51.33} & \textbf{20.90} & \textbf{59.4} & \textbf{14.73} & 20.12 & 3.99 & \textbf{12.28} \\
\bottomrule
\end{tabularx}
}
\caption{Accuracy (\%) on seven mathematical reasoning benchmarks across four backbones and three configurations: SFT, SFT+pause (16 pause tokens after the prefix), SFT+latent rollout (16 rollout tokens after the prefix), and Bottlenecked Transformer (ours; frozen SFT backbone augmented with Cache Processor). Scores are pass@1 under greedy decoding. Bold indicates the best result within each backbone.}
\vspace{-7pt}
\label{tab:main_result}
\end{table}

\paragraph{Training.}
\label{processor_training}
Learning proceeds in two stages. In the first stage, the backbone $\mathcal{M}^{\mathrm{LLM}}_\theta$ undergoes SFT on reasoning trajectories with the standard next-token cross-entropy objective. In the second stage, the backbone is frozen and only the processor parameters $\omega$ are updated. Each training sequence $s_{0:N}$ is split into individual reasoning steps $(s_0,\ldots,s_N)$. For step $s_n$, the backbone first processes the tokens in that step to append new KVs to the cache. The Processor is then invoked, selecting $(k_{(s)}^{(\ell)}, (k_{(s)}^{(\ell)})$ at each layer and applying the in-place rewrite. Cross entropy loss for the next reasoning step $s_{n+1}$ is then computed, conditioned on the rewritten cache, and backpropagated through the Processor. We truncate BPTT across step boundaries, such that the Processor is trained solely to rewrite the cache in a way that improves prediction of the next reasoning step. 

Note that we do not implement an IB-style loss function; our goal is to realise a plausible mechanism for (re)consolidation in Transformer LLMs, supported by our theoretical findings that periodic memory rewrites may improve generalisation. The rewritten cache realises a new sequence-level terminal bottleneck $\tilde{Z}$. Training the Processor to minimise the cross entropy loss of the entire next reasoning step is equivalent to maximising $I(S_{n+1}; \tilde{Z})$. In other words, $\tilde{Z}$ is trained solely to improve prediction of future sequences, with no requirement for the rewritten entries to retain unnecessary information for reconstructing their input sequence. Moreover, whilst we do not explicitly include a compression term for minimisation of $I(S_{0:n}, \tilde{Z})$, removal of pressure to maximise this quantity (as we showed to occur in vanilla Transformers) opens a pathway for implicit minimisation of this term via noise injection from SGD (as described in Section~\ref{sec2.2}).

\section{Experiments}
\subsection{Performance on Mathematical Reasoning Tasks}
\label{math-perf}
We evaluate the Bottlenecked Transformer on a set of mathematical reasoning tasks, choosing this domain as it offers an easily verifiable testbed for for observing improved reasoning generalisation. We compare four settings: (i) a vanilla LLM fine-tuned for one epoch on a math dataset (SFT), (ii) a pause-token baseline trained identically but includes 16 pause tokens appended after each question prefix (SFT+pause, following prior convention), (iii) a latent rollout model (inspired by Coconut \citep{coconut}) which performs an n-step latent rollout directly in the token space by feeding the final hidden state back into the model without decoding to tokens, and (iv) our Bottlenecked Transformer, which freezes the one-epoch SFT model as a backbone and trains the Cache Processor for one epoch using the procedure in Section~\ref{processor_training}. We use SFT as a standard Transformer baseline and SFT with pause and SFT with latent rollout as token-mediated ALSC baselines. We omit other ALSC variants (residual/cache operators) as these primarily target style/behavior control or memory footprint reduction rather than generalisation. All models are trained on 128k examples from OpenMathInstruct-2, a large synthetic mix of GSM8K/MATH-style questions \citep{toshniwal2024openmath2}.  We evaluate on seven benchmarks: GSM8K, MATH, SVAMP, TheoremQA, LogiQA, Gaokao-Math, and GSM-Hard. Six are mathematical reasoning tasks; LogiQA is a logical reasoning task included to test transfer beyond mathematics. For all experiments, we fix Processor hidden size \(d_p=512\), intermediate size \(2240\), 16 heads per Processor block, selective reconsolidation uses \(k=32\). Detailed hyperparameters can be found in Appendix~\ref{main-setup}.

Results are given in Table~\ref{tab:main_result}. Across backbones and tasks, the Bottlenecked Transformer improves over both baselines in almost all cases. Gains are strongest on in-distribution math benchmarks (GSM8K, MATH, SVAMP, GSM-Hard): e.g., Llama-3.2 1B on SVAMP (+6.6 points, 38.0$\rightarrow$44.6), Llama-3.2 3B on GSM8K (+4.6, 46.78$\rightarrow$51.33), Qwen-3 0.6B on MATH (+2.4, 26.68$\rightarrow$29.08), and Llama-3.1 8B on LogiQA (+3.1, 20.74$\rightarrow$23.81). On the more out-of-distribution QA-style tasks, improvements generally persist (e.g., TheoremQA matches or exceeds baselines on all backbones), with one notable exception: LogiQA on Llama-3.2 3B where plain SFT is slightly higher (22.12 vs.\ 20.12). The main underperformance is Gaokao-MathQA, where baselines often win (e.g., Qwen-0.6B and Llama-3.1 8B), consistent with a distribution/language shift (Chinese) beyond the Cache Processor’s training exposure. By contrast, the pause-token baseline shows variable and often lower performance than plain SFT when used only at fine-tuning (e.g., consistent drops on Llama-3.1 8B and Qwen-3 0.6B), with only occasional wins such as TheoremQA at 8B or Gaokao-Math on some backbones. This mirrors findings from the original pause token paper, which showed reliable gains only when paired with continued pretraining before SFT. Additionally, the latent rollout baseline typically underperforms even the pause token baseline, which is consistent with results seen in the original Coconut paper, wherein the model performed slightly worse than a Vanilla model but saw improved efficiency (fewer tokens needed per answer). Performance degradation is especially bad for the Llama 3.1 8B model, which sees severe model destabilisation under continuous latent rollouts.
\begin{figure}[t]
  \centering
  \includegraphics[width=\linewidth]{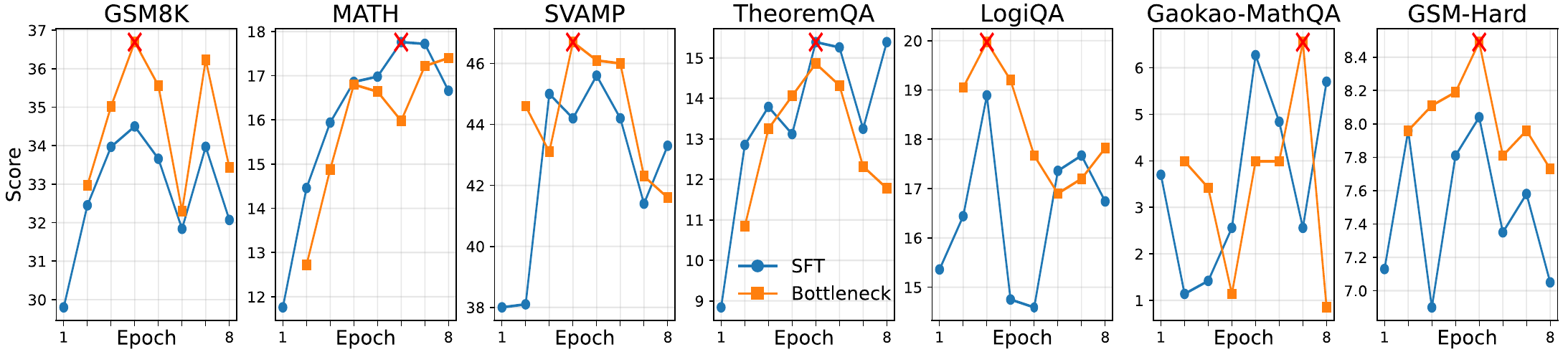}
    \caption{Epoch-matched comparison of \textit{SFT@\(N\)} and \textit{Bottleneck@\(N\)} across seven tasks. The backbone is SFT-trained for 8 epochs with per-epoch checkpoints; Bottleneck@\(N\) uses checkpoint \(N{-}1\) plus one Processor epoch, and curves plot accuracy versus total epochs \(N\). The red \(\times\) marks the highest score for each task across both model variants and all \(N\).} 
  \label{fig:epoch-plot}
\end{figure}

\subsection{Epoch-Matched Training Budget Ablation}
\label{budget}
To compare extra SFT with cache (re)consolidation under the same training budget, we align models by the total number of training epochs seen. We first train a backbone with SFT for 8 epochs, saving a checkpoint after each epoch. For every checkpoint, we freeze the backbone and train a Cache Processor for one additional epoch. We then compare SFT@\(N\) (pure SFT for \(N\) epochs) against Bottleneck@\(N\) built from checkpoint \(N{-}1\) plus one Processor epoch (both variants have seen \(N\) epochs). We use a Llama 3.2 1B backbone, with same Cache Processor configuration as in Section~\ref{math-perf}. Across all seven tasks (Fig.~\ref{fig:epoch-plot}), Bottleneck@\(\!N\) outperforms SFT@\(\!N\) on most \(N\) for GSM8K, GSM-Hard, SVAMP, and LogiQA, and the best score attained on these tasks over any \(N\) is achieved by a Bottleneck model. Two consistent exceptions are MATH and, to a lesser extent, TheoremQA, where SFT@\(\!N\) tends to be higher; additionally, Gaokao-MathQA mostly favors SFT@\(\!N\) at a given \(N\), although the single best score over all \(N\) is still achieved by a Bottleneck model. A plausible reason is that these settings require sustained access to precise symbolic/theorem or language-specific details, and step-boundary top-\(k\) reconsolidation (\(k{=}32\)) may down-weight earlier formula tokens or non-English cues that remain predictive. 


\begin{table}[t]
\centering
\setlength{\tabcolsep}{6pt}
\renewcommand{\arraystretch}{1.15}
\label{tab:topk-ablation}
\scalebox{0.72}{
\begin{tabular}{c c c c c c c c}
\toprule
$k$ & GSM8K & MATH & SVAMP & TheoremQA & LogiQA & Gaokao\text{-}Math & GSM\text{-}Hard \\
\midrule
16  & \GSMEightK{32.07} & \MathCol{13.12} & \SVAMPcol{43.2} & \TheoremQAcol{10.04} & \LogiQAcol{19.05} & \GaokaoMathcol{3.7}  & \GSMHardcol{7.58} \\
32  & \GSMEightK{32.97} & \MathCol{12.72} & \SVAMPcol{44.6} & \TheoremQAcol{10.84} & \LogiQAcol{19.05} & \GaokaoMathcol{3.99} & \GSMHardcol{7.96} \\
64  & \GSMEightK{33.43} & \MathCol{12.94} & \SVAMPcol{44.2} & \TheoremQAcol{10.04} & \LogiQAcol{19.2}  & \GaokaoMathcol{2.28} & \GSMHardcol{7.96} \\
128 & \GSMEightK{33.05} & \MathCol{13.2}  & \SVAMPcol{43.3} & \TheoremQAcol{9.64}  & \LogiQAcol{19.05} & \GaokaoMathcol{3.13} & \GSMHardcol{7.73} \\
256 & \GSMEightK{33.05} & \MathCol{13.34} & \SVAMPcol{43.3} & \TheoremQAcol{9.5}   & \LogiQAcol{17.81} & \GaokaoMathcol{2.28} & \GSMHardcol{7.58} \\
\bottomrule
\end{tabular}
}
\caption{Top-$k$ ablation of the bottleneck model across tasks (backbone: Llama~3.2~1B). Each column is color-scaled from red (lowest) through yellow (middle) to green (highest), with softened tones.}
\end{table}

\begin{table}[t]
\centering
\setlength{\tabcolsep}{6pt}
\renewcommand{\arraystretch}{1.15}
\label{tab:RSW-ablation}
\scalebox{0.72}{
\begin{tabular}{c c c c c c c c}
\toprule
$R$ & GSM8K & MATH & SVAMP & TheoremQA & LogiQA & Gaokao\text{-}Math & GSM\text{-}Hard \\
\midrule
16  & \GSMEightKTwo{31.69} & \MathColTwo{12.92} & \SVAMPcolTwo{42.70} & \TheoremQAcolTwo{9.77}  & \LogiQAcolTwo{18.89} & \GaokaoMathcolTwo{4.27} & \GSMHardcolTwo{7.88} \\
32  & \GSMEightKTwo{32.45} & \MathColTwo{12.23} & \SVAMPcolTwo{43.20} & \TheoremQAcolTwo{10.98} & \LogiQAcolTwo{17.97} & \GaokaoMathcolTwo{2.85} & \GSMHardcolTwo{7.81} \\
48  & \GSMEightKTwo{31.99} & \MathColTwo{12.84} & \SVAMPcolTwo{42.30} & \TheoremQAcolTwo{11.38} & \LogiQAcolTwo{18.13} & \GaokaoMathcolTwo{3.13} & \GSMHardcolTwo{7.96} \\
64  & \GSMEightKTwo{32.07} & \MathColTwo{12.18} & \SVAMPcolTwo{43.4}  & \TheoremQAcolTwo{12.05} & \LogiQAcolTwo{17.97} & \GaokaoMathcolTwo{3.13} & \GSMHardcolTwo{7.58} \\
96  & \GSMEightKTwo{32.22} & \MathColTwo{13.02} & \SVAMPcolTwo{44}    & \TheoremQAcolTwo{10.58} & \LogiQAcolTwo{19.35} & \GaokaoMathcolTwo{5.13} & \GSMHardcolTwo{7.96} \\
\bottomrule
\end{tabular}
}
\caption{Ablation over recent-step window size \(R\) for the Bottlenecked Transformer (backbone: Llama 3.2 1B). The Cache Processor is invoked once at the end of the prompt and then every \(R\) tokens, so \(R\) controls the length of the local segment consolidated at each update. Performance is broadly stable across \(R\), with slight gains for moderate windows.}
\end{table}

\subsection{Reconsolidation Budget (\texorpdfstring{$k$}{k}) Ablation}

We ablate the Processor’s attention-guided selection budget by varying the number of prior positions $k$ that are reconsolidated per layer at each Processor invocation, holding all other settings fixed (backbone: Llama~3.2~1B; identical training/evaluation protocol as Section~\ref{math-perf}). For each $k$ we train a separate Processor and report accuracy on the seven benchmarks. Table~\ref{tab:topk-ablation} summarizes results. Across all tasks except \textsc{MATH}, moderate budgets ($k\approx32$ to $k\approx64$) are generally optimal. In contrast, \textsc{MATH} benefits from larger budgets, with best scores at $k\approx128$ or $256$. This likely reflects that \textsc{MATH} contains harder problems with longer solutions and stronger long-range dependencies. It also offers a plausible explanation for the Bottleneck model’s weaker \textsc{MATH} performance in the training budget experiment (Section~\ref{budget}), where the reconsolidation window was fixed at $k=32$.

\begin{wrapfigure}[22]{R}{0.36\textwidth}  
  \centering
  \vspace{-1.3\baselineskip}               
  \includegraphics[width=0.38\textwidth]{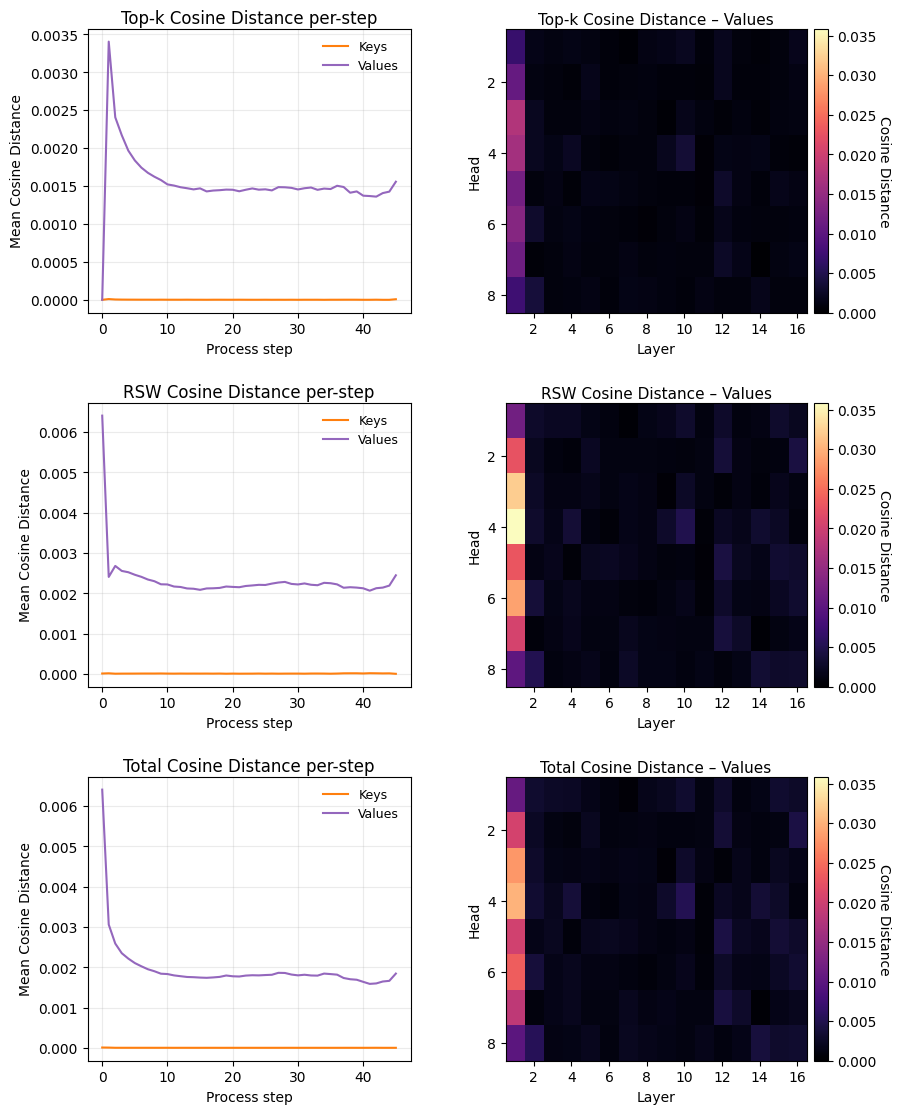}
  \caption{Cache Processor rewrite magnitudes on GSM8K. Left: per-invocation mean distances for top-$k$, recent-step window, and all rewritten tokens. Right: layer–head heatmaps of mean cosine distance between pre- and post-Processor value vectors.}
  \label{fig:magnitude-experiment}
\end{wrapfigure}

\subsection{Recent step window (\texorpdfstring{$R$}{R}) ablation}

We also ablate the size of the recent-step window \(R\) by invoking the Cache Processor once at the end of the prompt and then at every fixed \(R\) tokens during generation, so that \(R\) directly controls how many of the most recent tokens are consolidated at each call (Table~\ref{tab:RSW-ablation}). Across benchmarks, performance remains relatively stable over a broad range of \(R\), with mild gains for moderate to larger windows (e.g., \(R \approx 64\text{--}96\)) and small drops when consolidation is restricted to very short windows. This suggests that the Processor benefits from access to a reasonably sized local context, but does not require fine-grained, per-token updates to yield gains. Together with the top-\(k\) reconsolidation ablation, these results indicate that our memory (re)consolidation mechanism is robust to the precise update schedule, so long as it can periodically reshape a medium-horizon segment of the working memory rather than attempting to track every token verbatim.


\subsection{Processor Rewrite Magnitudes}
\label{sec:rewrite-magnitudes}

We measure how strongly the Cache Processor modifies the KV cache by tracking cosine distances between entries before and after each rewrite. On GSM8K with the Llama~3.2~1B Bottlenecked Transformer, we compute mean cosine distance at every processor invocation for (i) the top-$k$ recalled tokens, (ii) the recent-step window (RSW), and (iii) all rewritten entries. Results are shown in Figure~\ref{fig:magnitude-experiment}. Across all three groups, value vectors undergo nontrivial updates, while key vectors remain almost unchanged, indicating that the Processor mainly edits the contents of memory rather than its addressing. Rewrite magnitudes are largest at early processing steps and then settle into a stable plateau after roughly ten invocations, showing that the Processor does not collapse to the identity map but applies consistent moderate adjustments throughout generation. Layer--head heatmaps of value-vector distances show that edits are concentrated in the earliest layers, with only small changes in middle and later layers. This suggests that the Processor learns to reshape low-level representations that then propagate forward through the backbone, rather than rewriting deep layers directly.

Estimating $I(X;Z)$ for a high-dimensional, variable-length KV cache is intractable in our setting, so we use rewrite magnitudes as a qualitative proxy for how the Processor reshapes the working memory state. Our observation of systematic shifts in value vectors away from their teacher-forced encodings indicates that the model is restructuring the content of selected memories. Because the Processor is trained solely through next-step prediction loss, these local, persistent edits suggest that past information is being reorganised while maintaining what is useful for future tokens. From the Information Bottleneck perspective, such prediction-preserving, non-identity updates are naturally associated with reducing redundant input detail and making more efficient use of the bottleneck; here we treat rewrite magnitudes as an indirect, qualitative signal of this process rather than a direct estimate of information-theoretic quantities.

\section{Discussion and Future Work}
\label{discussion}
Our work has explored the gap in cache-operator ALSC systems that pertains to our interpretation of memory (re)consolidation, giving both a theoretical justification as to why this beneficial in decoder-only Transformer LLMs and empirical verification via an architecture that improves mathematical reasoning performance. Here we discuss limitations of our method.

Training the Processor solely through next-step cross-entropy can produce high-variance, poorly localized credit assignment, providing weak supervision for cache rewrites, making it challenging for the model to escape its strong local optimum. Training a model from scratch may alleviate this issue. Additionally, we do not include an explicit information–theoretic objective for compression via reduction of $I(X;Z)$: any information compression can only arise from the data processing inequality or SGD noise. Whilst direct MI estimation in a high-dimension cache is challenging, a promising route is controlled noise injection into selected KV entries followed by iterative denoising/refinement, which constitutes a  mapping that reduces $I(X;Z)$ while preserving predictive structure $I(Z;Y)$ (by the data–processing inequality and denoising-as-regularization). Such a mechanism would essentially constitute iterative latent reasoning in the model's memory space; past works exploring this idea in non-LLM-based frameworks have yielded promising results \citep{du2024learningiterativereasoningenergy}.

Regarding our interpretation/implementation of consolidation and reconsolidation, neuroscientific literature indicates that these are related but partially distinct processes: consolidation unfolds over hours to days with systems-level reorganization and sleep-driven replay, whereas reconsolidation is a brief, retrieval-induced window in which a reactivated trace becomes labile and then re-stabilises \citep{Dudai2015ConsolidationTransformation,StickgoldWalker2007SleepDependent}. In light of this, our single, online Processor collapses two modes that in biology differ in triggers and timescales; a closer analogue would pair an offline, replay-style consolidator with an online, retrieval-contingent reconsolidator. Additionally, reconsolidation appears to depend on prediction error at retrieval, i.e., a mismatch is often required to open the plastic window, suggesting that surprise/PE gating (rather than a fixed newline trigger) would be more suitable for determining when reconsolidation should occur~\citep{ExtonMcGuinness2015PredictionErrorReconsolidation,Fernandez2016FateOfMemoryPE}. More closely aligning future (re)consolidation architectures with these biological mechanisms may yield substantial gains over our current models.

\section*{Ethics Statement}
We adhere to the ICLR Code of Ethics. Our study uses publicly available, licensed datasets and synthetic math corpora; no human subjects or private data were involved. 

\section*{Reproducibility Statement}
We detail all training and evaluation settings (datasets, preprocessing, hyperparameters, model sizes, and decoding) in Appendix~\ref{sec:reproducibility}, and provide proofs for theoretical claims in Appendix~\ref{proofs}.

\bibliography{iclr2026_conference}
\bibliographystyle{iclr2026_conference}

\appendix

\section*{Appendix}	
\section{Proofs}
\label{proofs}
\subsection{Proof of Lemma~\ref{thm:last-bottleneck-bound}}
\label{proof-lemma}
\begin{proof}
Under the non-trivial case where $Z \neq \hat{Z}$, by Definition~ \ref{order}, we have that:
\begin{align*}
    X \rightarrow Z \rightarrow \hat{Z} \rightarrow Y
\end{align*}
Under the Data Processing Inequality, we thus have:
\begin{align*}
    I(X;\hat{Z}) \;\le\; I(X;Z).
\end{align*}
\end{proof}

\subsection{Proof of Theorem~\ref{thm:kv-cache-terminal}}

\begin{proof}
   Assume for contradiction that there exists some bottleneck $Z'$ strictly deeper than $C_{0:n}$ (i.e., $C_{0:n} \prec Z'$). By definition of the partial order, we could discard $C_{0:n}$ when predicting $S_{n+1}$, so
    \[
        p_\theta\bigl(S_{n+1} \mid C_{0:n},\, Z' \bigr) \;=\; p_\theta\bigl(S_{n+1} \mid Z'\bigr).
    \]
    However, under a decoder-only Transformer, under arbitrary input/output sequence variables $(S_{0:n}, S_{n+1})$, decoding of $S_{n+1}$ must be conditioned on $C_{0:n}$, as $C_{0:n}$ is a projection of the input sequence $(S_{0:n}$ and is trained to contain all information necessary to predict $S_{n+1}$. Furthermore, under this architecture, no further processing occurs on $C_{0:n}$ once it has been constructed. Hence, $C_{0:n}$ in its entirety cannot be discarded and replaced by some variable Z', contradicting $Z' \prec C_{0:n}$. Therefore, $C_{0:n}$ is the maximal element in the set of bottlenecks, and thus it is the terminal bottleneck.
\end{proof}

\subsection{Proof of Theorem~\ref{thm:reasoning_traces}}

\begin{proof}
    Since $c_{0:n} = f_\phi(s_{0:n})$ is a deterministic mapping under a vanilla decoder-only Transformer, we can write our objective function as:
    \begin{align}
        L(\theta) &= \mathbb{E}_{p(s_{0:N}, c_{0:N})}\Bigl[
            \sum_{n=0}^{N-1} \log p_\theta\bigl(s_{n+1} \,\big|\; s_{0:n}\bigr)\Bigr] \\
        &= \mathbb{E}_{p(s_{0:N}, c_{0:N})}\Bigl[
            \sum_{n=0}^{N-1} \log p_\theta\bigl(s_{n+1} \,\big|\; c_{0:n}\bigr)\Bigr] \\
        &= \sum_{n=0}^{N-1} \mathbb{E}_{p(s_{n+1}, c_{0:n})}\Bigl[
            \log p_\theta\bigl(s_{n+1} \,\big|\; c_{0:n}\bigr)\Bigr] 
    \end{align}
For two random variables $A$, $B$ with samples $(a,b)$ from joint distribution $p(A,B)$ and approximate distribution $q(A,B)$, we see that:
\begin{align}
\mathbb{E}_{p(a,b)}\left[\log q(a|b)]\right] &\equiv  \mathbb{E}_{p(a,b)}\left[\log p(a|b) + \log \frac{q(a|b)}{p(a|b)}\right] \\
&\equiv -H(A|B) - \mathbb{E}_{p(b)}\left[D_{KL}\left[\,p(a|b)\,||\,q(a|b)\right]\right]\\
&\equiv I(A;B) - H(A) - \mathbb{E}_{p(b)}[D_{KL}\left[p(a|b)\,||\,q(a|b)\right]]\\
&\leq I(A;B) - H(A) 
\end{align}
Which implies that:
\begin{align}
\label{bound1}
    L(\theta) \leq \sum_{n=0}^{N-1} I(C_{0:n}; S_{n+1}) - H(S_{n+1})
\end{align}
We can also write $L(\theta)$ as:
\begin{align}
    L(\theta) &= \mathbb{E}_{p(s_{0:N}, k_{0:N}, v_{0:N})}\Bigl[
            \sum_{n=0}^{N-1} \log p_\theta\bigl(s_{n+1} \,\big|\; s_{0:n}, k_{0:n+1}, v_{0:n+1}\bigr)\Bigr] \\
    &= \sum_{n=0}^{N-1}\mathbb{E}_{p(s_{0:n}, k_{0:n+1}, v_{0:n+1})}\Bigl[
             \log p_\theta\bigl(s_{n+1} \,\big|\; s_{0:n}, k_{0:n+1}, v_{0:n+1}\bigr)\Bigr]  \\
    &= -\sum_{n=0}^{N-1}H(S_{n+1} | S_{0:n}, K_{0:n+1}, V_{0:n+1}) \\
    &= \sum_{n=0}^{N-1} I(S_{n+1}; K_{0:n+1}, V_{0:n+1}|S_{0:n}) - H(S_{n+1}|S_{0:n}) \\
    &\leq \sum_{n=0}^{N-1} I(S_{n+1}; C_{0:n+1}|S_{0:n})  - H(S_{n+1}|S_{0:n}) \\
    &\leq \sum_{n=0}^{N-1} I(S_{0:{n+1}}; C_{0:n+1}) - H(S_{n+1}|S_{0:n}) \\
    &= \sum_{n=1}^{N} I(S_{0:{n}}; C_{0:n}) - \sum_{n=0}^{N-1} H(S_{n+1}|S_{0:n}) \label{bound2}
\end{align}

Thus, combining Equations \ref{bound1} and \ref{bound2} we have:
\begin{align*}
2L(\theta) \leq \sum_{n=1}^{N} I(S_{0:{n}}; C_{0:n}) + \sum_{n=0}^{N-1} \bigl[I(C_{0:n}; S_{n+1}) - H(S_{n+1}) -H(S_{n+1}|S_{0:n})\bigr] \\
    L(\theta) \leq \frac{1}{2}\Bigl[\sum_{n=1}^{N} I(S_{0:{n}}; C_{0:n}) + \sum_{n=0}^{N-1} \bigl[I(C_{0:n}; S_{n+1}) - H(S_{n+1}) -H(S_{n+1}|S_{0:n})\bigr]\Bigr]
\end{align*}
\end{proof}

\section{Consolidation/Reconsolidation Mechanism}
\label{consolidation-mech}
 Let $J_n=\mathrm{Idx}(s_n)$ denote the token indices of a just-completed step and $P_{<n}=\mathrm{Idx}(s_{<n})$ the indices of all prior tokens. For each layer $\ell\in\{1,\ldots,L\}$ we construct an index set
\[
\mathcal{I}^{(\ell)}_n \;=\; J_n \;\cup\; \mathrm{TopK}^{(\ell)}_n (P_{<n})
\]
where $\mathrm{TopK}^{(\ell)}_n (P_{<n})$ contains the $k$ prior positions with the largest attention mass with the current step. Writing $A^{(\ell,h)}\in\mathbb{R}^{t\times t}$ for the backbone’s attention matrix at layer $\ell$, head $h$, over the prefix $s_{\le n}$, the mass assigned by the tokens of $s_n$ to a prior index $i\in P_{<n}$ is
\[
\alpha^{(\ell)}_{i}
\;=\;
\frac{1}{|J_n|\,H}\sum_{h=1}^{H}\sum_{j\in J_n} A^{(\ell,h)}_{j,i},
\qquad
\mathrm{TopK}^{(\ell)}_n (P_{<n}) \;=\; \operatorname*{arg\,topk}_{i\in P_{<n}} \alpha^{(\ell)}_{i},
\]
where $H$ denotes the number of attention heads. Only the entries at $\mathcal{I}^{(\ell)}_n$ are modified by the Processor; all other cache positions remain unchanged. This realises (i) consolidation of recently written context, and (ii) reconsolidation of the few most recalled KVs during the recently written step. 

\label{task-details}

\section{Cache Processor Size Ablation}

\label{sec:size-ablation}

In this experiment, we ablate the Cache Processor size and training duration using a Llama 3.2 1B backbone, by varying the Processor feedforward intermediate width, holding depth, number of heads, selection policy (RSW + top-$k$), and all training hyperparameters fixed (same as in Section~\ref{math-perf}, see details in Section~\ref{sec:reproducibility}. The backbone is trained for
one epoch of SFT on OpenMathInstruct-2. We train four Processors on this backbone (of 59M, 84M, 109M, 134M parameters). Each Processor is trained for four
epochs, and we evaluate at the end of every epoch.

Results are shown in Figure~\ref{fig:size-ablation}. Increasing training duration generally improves performance of models on all tasks, with this effect being most apparent on the MATH, TheoremQA and GSM-Hard tasks, which typically contain the hardest problems. For other tasks, performance beyond one epoch of training is more variable across all models, often plateauing early. This suggests that additional training may not be beneficial in cases where downstream tasks are simpler. Additionally, there is no clear winner in model size across all tasks. This suggests that training so as to make full use of the model capacity is challenging. We hypothesise that this is due to poor credit assignment from next-step supervision, making it challenging to escape the strong local optimum that the backbone resides in, as discussed in Section~\ref{discussion}.

\begin{figure}[t]
  \centering
  \includegraphics[width=\linewidth]{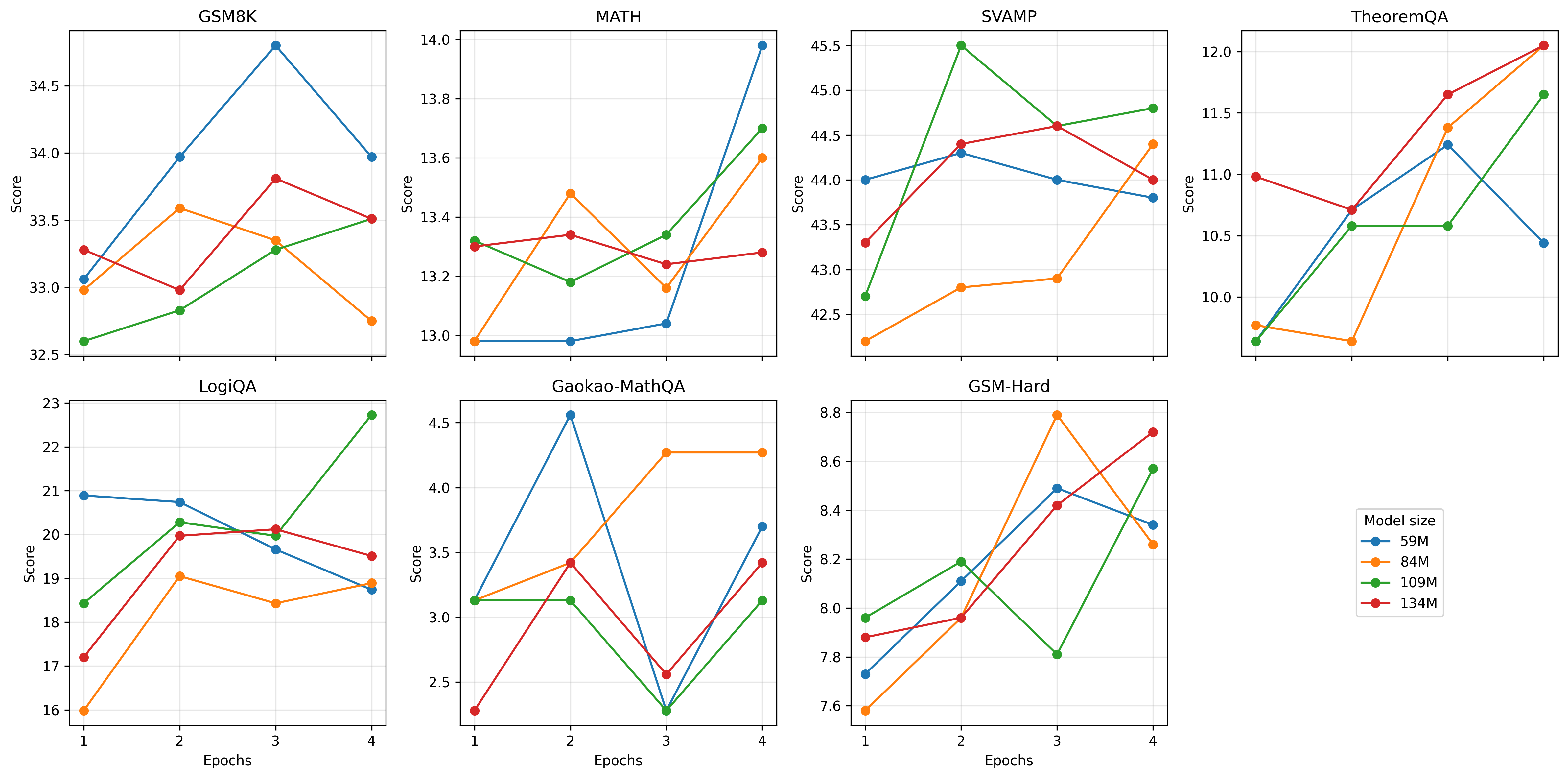}
  \caption{Size ablation of the Cache Processor on a frozen Llama~3.2~1B
  backbone, showing per-epoch performance of each variant on each task.}
  \label{fig:size-ablation}
\end{figure}

\section{Reproducibility}
\label{sec:reproducibility}

\subsection{Main Results}
\label{main-setup}
Here we list details for reproducibility of our main experimental run in Section~\ref{math-perf}. 

\paragraph{Training details}. We train on the first 128k examples from the 1M variant of the OpenMathInstruct-2 dataset \citep{toshniwal2024openmath2}. For all runs, we use a batch size of 128 with learning rate of $1e-4$. We use a constant LR with no warmup for all runs except for experiments with Llama 3.1 8B, where we use a warmup ratio of 0.05 and cosine LR scheduling. We truncate training sequences to a max length of 512 tokens.

\paragraph{Evaluation details.} For all evaluation runs, we employ greedy decoding, truncating responses to a maximum length of 2048. We evaluate on seven tasks:

\begin{itemize}
  \item \textbf{GSM8K}: Grade-school math word problems requiring multi-step arithmetic reasoning and short numeric answers. \citep{gsm8k}
  \item \textbf{MATH}: Competition-style mathematics problems (e.g., algebra, geometry, number theory, counting) with formal, multi-step solutions. \cite{hendrycksmath2021}
  \item \textbf{SVAMP}: Simple arithmetic word problems rewritten with semantic variations to test robustness to superficial cues. \citep{svamp}
  \item \textbf{TheoremQA}: Question answering that requires recalling, understanding, or applying mathematical theorems and their conditions. \citep{chen2023theoremqatheoremdrivenquestionanswering}
  \item \textbf{LogiQA}: Multiple-choice logical reasoning and argument analysis questions modeled after civil service exam items. \citep{liu2020logiqa}
  \item \textbf{Gaokao-MathQA}: Math question answering drawn from China’s Gaokao (college entrance) exams, often involving symbolic manipulation and problem solving. \citep{zhong-etal-2024-agieval}
  \item \textbf{GSM-Hard}: A harder subset of grade-school math problems with much larger numbers, designed to stress multi-step reasoning beyond standard GSM8K difficulty.\citep{gao2022pal}
\end{itemize}

\paragraph{Pause token baseline configuration.} During training, we instantiate 16 pause tokens in the embedding table. During training/generation, we append these 16 tokens to the end of the question prefix. We perform SFT via standard cross entropy loss on response completions.

\paragraph{Bottlenecked Transformer Configuration.} For all backbones, we fix the Processor design across backbones (one block per backbone layer; hidden size $d_p{=}512$; MLP intermediate size $2240$; $16$ heads per block, fixing reconsolidation budget $k=32$. Table~\ref{tab:proc-sizes-only} shows parameter counts for the Processor for each backbone, these vary because the projection layers to/from KV-space and number of Processor blocks depend on the backbone’s hidden dimension, number of attention heads, and number of layers. 

\begin{table}[h]
\centering
\small
\setlength{\tabcolsep}{10pt}
\renewcommand{\arraystretch}{1.12}
\label{tab:proc-sizes-only}
\begin{tabular}{l c}
\toprule
\textbf{Backbone} & \textbf{Processor Params} \\
\midrule
Llama 3.2 1B & 88.69M \\
Llama 3.2 3B & 184.63M \\
Llama 3.1 8B & 211.01M \\
Qwen 3 0.6B  & 184.63M \\
\bottomrule
\end{tabular}
\caption{Cache-Processor parameter counts per backbone for mathematical reasoning performance experiment.}
\end{table}

\subsection{Ablations}
For these experiments, where applicable, we use the same training/architectural configuration for SFT/Bottleneck models as is detailed in Section~\ref{main-setup}, using a Llama 3.2 1B Backbone.

\subsection{Computational Overhead}
For a Bottlenecked Transformer consisting of a Llama 3.2 1B backbone with an 89M parameter Cache Processor, setting $k=32$, memory footprint during the Cache Processor training stage is approximately 6x that of performing full parameter SFT, owing to the chunked training process which prevents parallelism for entire training examples, induces extra padding tokens, as well as the high computational cost of processing a large number of entries in a high dimensional KV cache. As such, wall clock time for a one-epoch training run for the Cache Processor is approximately 20x longer than performing SFT on the backbone. Note that this is partially an engineering issue: our method is currently incompatible with efficient attention methods such as Flash Attention.

During evaluation, memory footprint of the Bottlenecked Transformer with above configuration is approximately 25\% higher than a vanilla Transformer, with a 45\% increase in wall clock time, for an eval batch size of 16 in both cases. This relative reduction compared to training is due to the fact that during generation, the Cache Processor is invoked infrequently (once every reasoning step).

\section*{Statement on the Use of LLMs}
During manuscript preparation, we used large language models for editing, phrasing suggestions, and to assist in literature search. No analyses, results, proofs, or figures were produced by LLMs; all technical content, experiments, and conclusions are our own. 
\end{document}